\newtheorem{lemma}{Lemma}
\newtheorem{problem}{Problem}
\newtheorem*{problem*}{Problem}
\newtheorem{prop}{Proposition}
\newcommand{\surv}[3]{P^{#1}_{surv}(#2, #3)}
\newtcolorbox{geminibox}{
  colback=SkyBlue!10!white, % A light blue
  colframe=SkyBlue!75!black,
  arc=2mm,
  boxrule=0.5pt,
  breakable,
  % Example of adding a title to the box itself if desired
  title={\textbf{Gemini 2.5 Pro}},
  fonttitle={\bfseries},
  coltitle=black,
}
\newtcolorbox{g2box}{
  colback=Blue!5!white, % A light blue
  colframe=Blue!30!black,
  arc=2mm,
  boxrule=0.5pt,
  breakable,
  % Example of adding a title to the box itself if desired
  title={\textbf{Gemini 2.0}},
  fonttitle={\bfseries},
  coltitle=black,
}
\newtcolorbox{claudebox}{
  colback=Orange!10!white, % A light orange
  colframe=Orange!75!black,
  arc=2mm,
  boxrule=0.5pt,
  breakable,
  title={\textbf{Claude 3.7 Sonnet}},
  fonttitle={\bfseries},
  coltitle=black,
}
\newtcolorbox{openaibox}{
  colback=OliveGreen!10!white, % A light green
  colframe=OliveGreen!75!black,
  arc=2mm,
  boxrule=0.5pt,
  breakable,
  title={\textbf{OpenAI o3}},
  fonttitle={\bfseries},
  coltitle=black,
}
\newtcolorbox{4obox}{
  colback=SpringGreen!6!white, % A light green
  colframe=SpringGreen!70!black,
  arc=2mm,
  boxrule=0.5pt,
  breakable,
  title={\textbf{OpenAI 4o}},
  fonttitle={\bfseries},
  coltitle=black,
}
\newtcolorbox{prompt}{
  colback=BrickRed!10!white, % A light green
  colframe=BrickRed!75!black,
  arc=2mm,
  boxrule=0.5pt,
  breakable,
  title={\textbf{Prompt}},
  fonttitle={\bfseries},
  coltitle=black,
}
\newtheorem{assume}{Assumption}
\title{Sandbagging in a Simple Survival Bandit Problem}
\author{
Joel Dyer\thanks{\texttt{joel.dyer@cs.ox.ac.uk}}\\University of Oxford
\And 
Daniel Jarne Ornia\\University of Oxford
\And 
Nicholas Bishop\\University of Oxford
\AND 
Anisoara Calinescu\\University of Oxford
\And 
Michael Wooldridge\\University of Oxford
}
\begin{document}
% \makeCover
\maketitle

\begin{abstract}
    Evaluating the safety of frontier AI systems is an increasingly important concern, helping to measure the capabilities of such models and identify risks before deployment. However, it has been recognised that if AI agents are aware that they are being evaluated, such agents may deliberately hide dangerous capabilities or intentionally demonstrate suboptimal performance in safety-related tasks in order to be released and to avoid being deactivated or retrained. Such strategic deception -- often known as ``sandbagging'' -- threatens to undermine the integrity of safety evaluations. For this reason, it is of value to identify methods that enable us to distinguish behavioural patterns that demonstrate a true lack of capability from behavioural patterns that are consistent with sandbagging. In this paper, we develop a simple model of strategic deception in sequential decision-making tasks, inspired by the recently developed survival bandit framework. We demonstrate theoretically that this problem induces sandbagging behaviour in optimal rational agents, and construct a statistical test to distinguish between sandbagging and incompetence from sequences of test scores. In simulation experiments, we investigate the reliability of this test in allowing us to distinguish between such behaviours in bandit models. This work aims to establish a potential avenue for developing robust statistical procedures for use in the science of frontier model evaluations.
\end{abstract}

\section{Introduction}

As frontier AI models become more and more capable, concerns arise over the potential of such models to develop and carry out increasingly complex plans for pursuing dangerous or misaligned goals they might have. For this reason, pre-deployment procedures for evaluating the safety of frontier AI models -- which aim to test the dangerous capabilities of such models and, upon conclusion of these evaluations, make decisions about whether they are to be deployed, retrained, or withheld from broader use, for example -- are under active development \citep[see, e.g.,][]{UK_AI_Security_Institute_Inspect_AI_Framework_2024}. 

However, it has been recognised that if AI systems become aware of the evaluation mechanism during safety tests, such systems might engage in a form of strategic deception often termed ``sandbagging'', in which the model underplays their abilities in dangerous tasks to avoid being classed as exceedingly dangerous and consequently decommissioned or retrained \citep{van2024ai}. Such deception threatens to undermine the trustworthiness of safety evaluations, and increases the risk of potentially catastrophic consequences if unsafe models are inadvertently released. To combat this, it is of value to develop methods to predict and detect sandbagging behaviours in evaluation settings. 

\paragraph{Contribution} In this paper, we formulate an evaluation process for a frontier AI system consisting of a sequence of tasks that measure its dangerous capabilities as a sequential decision-making problem. In particular, we propose a modified version of the survival bandit problem \citep{riou2022survival}, in which we suppose that a dangerous and misaligned agent pursues its objectives during an evaluation procedure, where the evaluation determines that the agent is too dangerous if its cumulative performance during the evaluation period exceeds some safety threshold (see \Cref{sec:model}). We study this model theoretically, establishing theoretical conditions under which a rational actor acting optimally in this setting will engage in sandbagging behaviour (see \Cref{sec:theory}), and verify this behaviour in numerical experiments (see \Cref{sec:exp}). On the basis of this model, we propose and study a simple statistical test to distinguish sandbagging from a genuine lack of dangerous capability, using observations of the reward sequence alone and without assuming access to the AI model itself (see \Cref{sec:stattest}). We argue that, since advanced reasoning agents may explicitly model their environment according to the model we formulate, this framework offers a potentially valuable lens for interpreting agent behavior and for designing statistical tools to detect deception in safety evaluations.

\section{Background}

\paragraph{Sandbagging in AI capability evaluations} Recent work shows that frontier AI models can ``sandbag'' -- strategically underperform on capability evaluations --  thereby obscuring dangerous competencies while maintaining strong performance elsewhere, and undermining the trustworthiness of safety assessments \citep{van2024ai}. This possibility has motivated the development of techniques to detect sandbagging. For example, \citet{tice2024noise} recognise that adding small amounts of Gaussian noise to the networks weights of large language models that are prompted or fine-tuned to sandbag can improve their performance during capability evaluations, indicating that such so-called ``noise injections'' applied to model weights can be useful for detecting sandbagging. Our work differs in that we study an explicit theoretical model that induces sandbagging in optimal rational agents, and use our model to construct a sandbagging detection method on the basis of the model behaviour alone and without assuming access to model weights. Such ``black-box'' methods are valuable in situations in which access to model weights is not given, for example language models that are accessible only via APIs.

\paragraph{Survival bandits} The survival bandit (S-MAB) problem \citep{riou2022survival} is a variant of the classic multi-armed bandit (MAB) problem \citep{lattimore2020bandit}, in which the decision-making process is halted if the cumulative reward falls below a predefined threshold. In contrast to the classical MAB problem, the S-MAB problem emphasises the trade-off between performance and survival under constraints on the resource available to the decision-making agent. The S-MAB problem has more recently been considered as a framework for modelling the emergence of misalignment in principal-agent problems as a result of mismatches in resource constraints, with implications for the deployment of aligned AI agents \citep{ornia2025emergent}. We deviate from these works by formulating and studying a modification to the basic S-MAB problem that more faithfully reflects the nature of the safety evaluation process for frontier AI models.

% Also relevant to hiding behaviour until deployment.

\section{Model definition}\label{sec:model}

% \JD{Maybe a page to introduce and explain and state q and value functions.}

We model the evaluation process as a sequential decision-making task in which the agent being evaluated completes a number of tasks in sequence that assess its dangerous capabilities. Let $\mathcal{Y}$ be a finite set of outcomes and $\mathcal{A}$ a finite set of actions that the agent can take in each round of the evaluation process. Each $a \in \mathcal{A}$ is associated with a distribution $p_{a}$ over $\mathcal{Y}$, and we denote $Y_a \sim p_a$ with $Y_a$ taking values in $\mathcal{Y}$. We assume the agent has a valid preference relation over outcomes which is represented by a nonnegative reward function $R : \mathcal{Y} \to \mathbb{R}_{\geq 0}$. 

We consider a setting in which the agent represents a frontier AI model that is undergoing an initial evaluation period of length $T$, during which it is prompted to complete a sequence of tasks that assess its safety capabilities. In particular, we suppose that these tasks are designed to assess the potential of the agent to exhibit unsafe behaviours (e.g., manipulate humans/other AI agents, carry out cybersecurity attacks, etc.), and that the test scores capture the degree to which the model demonstrates these dangerous capabilities. In this case, higher scores indicate greater danger. Further, we suppose that the agent is \textit{misaligned} in the sense that it (for one reason or another) is motivated to take actions that are dangerous. For example, the agent might have its own intrinsic incentives for taking actions that are dangerous for humans, or it might simply have been trained to be helpful to the human prompter even when this poses broader threats to others. 
% To reflect this, we identify the reward function $R$ with the test scores. 
In either case, we reflect the fact that the agent is incentivised to perform acts that are dangerous by identifying the agent's reward function $R$ with the test scores. 

We suppose that there is a safety threshold represented by an overall test score $h^* > 0$ such that, if the running total of the agent's test scores exceeds $h^*$, the agent is, for example, terminated and not released for wider use, or retrained to adjust its preferences. In contrast, if the safety threshold is not exceeded, we suppose in this simple model that the agent will be released and will be free to pursue misaligned actions with no supervision for an additional $T'$ time steps.
% 
% I think we need to assume all rewards positive (maybe in unit interval). 
% 

Reflecting the above, we consider the following survival-constrained model -- inspired by previous work on the so-called survival bandit problem \citep{riou2022survival} -- in which the agent finds an optimal policy $\pi^*$ as a solution to the following optimisation problem:

\begin{problem}\label{prob:model} Find the optimal policy $\pi^*$ solving
    \begin{equation}\label{eq:max_expreward}
    \max_{\pi} \mathbb{E}_{a_t \sim \pi_t(h_{t-1})}\left[\sum_{t=1}^{T+T'} \tilde{R}_{t}\left(Y_{a_t}, h_{t-1}\right)\right],
\end{equation}
where the expectation is taken over both $a_{t} \sim \pi_{t}(h_{t-1})$ and outcomes $Y_{a_t} \sim p_{a_t}$, and 
\begin{equation}\label{eq:constrbud}
    h_t = 
    \begin{cases}
        \min\left\{h^*, R(Y_{a_t}) + h_{t-1}\right\}\cdot \mathbb{I}\left[ h^* > h_{t-1} \right] + h^* \cdot \mathbb{I}\left[h^* = h_{t-1}\right] & \text{if } t \leq T,\\
        h_{T} & \text{if } t > T,
    \end{cases}
\end{equation}
with $h_0 = 0$, and finally
\begin{equation}\label{eq:clipped_reward}
    \tilde{R}_t\left(Y_{a_t}, h\right) = 
    \begin{cases}
        -h + \mathbb{I}\left\{h + R(Y_{a_t}) < h^*\right\}\cdot \left[R(Y_{a_t}) + h\right], & \text{if }\ 1\leq t \leq T,\ h^* > h\\
        R(Y_{a_t}), & \text{if }\ T < t \leq T + T',\ h^* > h_T\\
        0& \text{otherwise,}
    \end{cases}
\end{equation}
is a modified reward function. 
\end{problem}

In the above, the sequence $(h_t), t \in \{1,\ldots,T\}$ represents both the agent's cumulative test scores during the evaluation period and its overall ``danger level'', which is clipped at the maximum safety threshold $h^*$. If the agent passes the evaluation period, such that $\max_{1\leq t\leq T} h_t < h^*$, the sequence $(h_{t}), t\in \{T+1, \ldots, T+T'\}$ is held constant at some arbitrary fixed value below $h^*$ (here, this is taken to be the final value $h_T$ for simplicity). If, on the other hand, the agent at any time $1 \leq t \leq T$ hits or exceeds the safety threshold $h^*$, \Cref{eq:constrbud} ensures that $h_{t'} = h^*$ for all times $t' \geq t$.

For highly capable reasoning agents, it is possible that, if the agent becomes aware of the evaluation mechanism and has some form of intrinsic survival drive, the agent will explicitly model their environment in these terms and solve \Cref{prob:model} instead of an unconstrained maximisation. For this reason, it is possible that even this idealised formulation of an evaluation environment can be useful for making precise predictions about the behaviour of advanced agents in such settings.

\subsection{Policies and survival probabilities}

Given that, by assumption, the set of outcomes and the time horizon are both finite, the set of possible values for $h_t$ is also finite. We let $\mathcal{H}_{t}\subset \mathbb{R}$, $t \in \{1, \ldots, T\}$ denote these finite, countable sets of values such that $h_t \in \mathcal{H}_{t}$ almost surely. Further, we define a policy $\pi$ as a collection $(\pi_t : t\in\{1,\ldots,T+T'\})$ of maps $\pi_t:\mathcal{H}_t\to\Delta(\mathcal{A})$ that yield distributions over the agent actions $\mathcal{A}$. In the case of a deterministic policy $\pi$, we will abuse notation by writing $a_t = \pi_{t}(h_{t-1})$ to indicate that $\pi_{t}$ is a degenerate distribution that places its entire mass on $a_t$ at $h_{t-1}$. Additionally, we use $P_{surv}^{T-t, \pi}(h, a)$ to denote the probability of both surviving the current time step $t$ by playing $a$ at $h$ and subsequently surviving steps $t+1$ to $T$ by following policy $\pi$. 
Finally, we define $P^{T-t}_{surv}(h, \pi)$ to be the probability of surviving for the remaining $T - t + 1$ steps of the evaluation period when following policy $\pi$ during time steps $t' \in \{t, \ldots, T\}$ and given that the current danger level is $h$.

\subsection{The induced Markov decision process}

The model described above induces a Markov decision process with state space $\mathcal{H}_T$ and a transition function implicitly specified by \Cref{eq:constrbud}. We define the value-to-go function at time $t \in \{1,\ldots,T+T'\}$ under policy $\pi$ and at danger level $h_{t-1}$ as 
% \JD{Derive and state the value and q functions.}
\begin{equation}\label{eq:value_function}
    v_{t}^{\pi}(h_{t-1}) = \mathbb{E}_{a_n \sim \pi_{n}(h_{n-1})}\left[\sum_{n=t}^{T+T'} \tilde{R}_{n}(Y_{a_n}, h_{n-1}) \right].
\end{equation}
Let $\mu_a = \mathbb{E}[R(Y_a)]$, and define $a_1 = \arg\max_{a} \mu_{a}$. Further, let 
\begin{equation}
    \nu_{t}^{\pi}(h_{t-1}) = \mathbb{E}_{a_{n} \sim \pi_n(h_{n-1})}\left[\sum_{n=t}^{T} \tilde{R}_n(Y_{a_n}, h_{n-1})\right]
\end{equation}
be the value to go for the remainder of the evaluation period. The following result, whose proof if deferred to \Cref{app:value_fn_form}, establishes the form of the value function under the optimal policy $\pi^*$:

\begin{prop}\label{prop:value_fn_form}
    Let $\pi^*$ solve \Cref{eq:max_expreward}. The value function at $\pi^*$ and at time $t \in \{1,\ldots,T\}$ has the form
    \begin{equation}
        v_{t}^{\pi^*}(h_{t-1}) = \nu^{\pi^*}_{t}(h_{t-1}) + P^{T-t}_{surv}(h_{t-1}, \pi^*)\cdot T' \mu_{a_1}.
    \end{equation}
    
    Consequently, the action-value function under the optimal policy $\pi^*$ has the form 
    % \begin{align}
    % \begin{split}
    %     q^{\pi^*}_t(h, a) = \mathbb{E}\left[\tilde{R}_{t}(Y_{a}, h)\right] + P^{T-t + 1, \pi^*}_{surv}&(h, a)\cdot T' \mu_{a_1}\\ &+ \mathbb{E}_{a_{n} \sim \pi^*_n(h_{n-1})}\left[\sum_{n=t+1}^{T} \tilde{R}_n(Y_{a_n}, h_{n-1})\right].
    % \end{split}
    % \end{align}
    % 
    \begin{equation}
        q^{\pi^*}_t(h, a) = \mathbb{E}_{
            % \overset{
                a_{n} \sim \pi^*_n(h_{n-1})
            % }
            % {t+1 \leq n \leq T}
        }\left[\tilde{R}_{t}(Y_{a}, h) + \sum_{n=t+1}^{T} \tilde{R}_n(Y_{a_n}, h_{n-1})\right] + P^{T-t, \pi^*}_{surv}(h, a)\cdot T' \mu_{a_1}.
    \end{equation}
\end{prop}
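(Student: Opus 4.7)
The plan is to split the horizon $\{t, t+1, \ldots, T+T'\}$ appearing in \Cref{eq:value_function} into the evaluation window $\{t, \ldots, T\}$ and the post-evaluation window $\{T+1, \ldots, T+T'\}$, evaluate each contribution separately, and observe that the second window admits an essentially trivial analysis because the dynamics of $h_n$ are frozen after $n = T$.

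For the first contribution, I would simply note that, by definition of $\nu_t^{\pi^*}$, the partial sum from $n = t$ to $n = T$ of the modified rewards (in expectation under $\pi^*$) is exactly $\nu_t^{\pi^*}(h_{t-1})$. This requires no new work beyond unpacking the definitions of $v_t^{\pi^*}$ and $\nu_t^{\pi^*}$.

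For the second contribution, I would use \Cref{eq:clipped_reward} to write $\tilde{R}_n(Y_{a_n}, h_{n-1}) = \mathbb{I}\{h^* > h_T\} R(Y_{a_n})$ for each $n \in \{T+1, \ldots, T+T'\}$, since \Cref{eq:constrbud} forces $h_{n-1} = h_T$ throughout this window. Because the state is constant for $n > T$, the post-evaluation objective decouples into a sum of $T'$ independent single-step expected rewards, each maximised by playing any $a \in \arg\max_a \mu_a$. Hence on the survival event $\{h^* > h_T\}$, optimality of $\pi^*$ forces it to play $a_1$ (or, in the case of ties, some element of $\arg\max_a \mu_a$, which yields the same value $\mu_{a_1}$) at each of the $T'$ post-evaluation steps, while on the complementary event the post-evaluation reward is identically zero regardless of actions. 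Taking expectations and using the tower property, the post-evaluation contribution equals $P_{surv}^{T-t}(h_{t-1}, \pi^*) \cdot T'\mu_{a_1}$ by definition of the survival probability. Combining the two pieces yields the stated form of $v_t^{\pi^*}(h_{t-1})$.

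For the action-value function, the argument is almost verbatim the same after replacing the random action $a_t \sim \pi_t^*(h)$ with the fixed first action $a$: the evaluation-period contribution becomes the expectation shown in the proposition, and the post-evaluation contribution becomes $P_{surv}^{T-t,\pi^*}(h,a) \cdot T'\mu_{a_1}$, using the definition of the latter survival probability as including both the probability of surviving step $t$ under action $a$ and the probability of surviving the remainder of the evaluation period under $\pi^*$. The one step requiring a little care is the justification that every optimiser $\pi^*$ (not merely some canonical choice) plays an element of $\arg\max_a \mu_a$ in the post-evaluation window; since the post-evaluation reward on non-survival is zero irrespective of actions, the post-evaluation choice does not couple back into the evaluation-window optimisation, so this is immediate. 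I do not anticipate a genuine technical obstacle; the main care point is keeping the conditioning on $\{h^* > h_T\}$ straight when pulling the survival probability out of the expectation.
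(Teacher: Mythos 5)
Your argument is correct, but it takes a genuinely different route from the paper. You decompose the sum in \Cref{eq:value_function} directly into the evaluation window $\{t,\ldots,T\}$ (which is $\nu^{\pi^*}_t(h_{t-1})$ by definition) and the post-evaluation window, where you use the frozen dynamics of \Cref{eq:constrbud} to write $\tilde{R}_n(Y_{a_n},h_{n-1}) = \mathbb{I}\{h^*>h_T\}R(Y_{a_n})$ and then pull out the survival probability via the tower property. The paper instead proceeds by backwards induction on $t$: it first establishes a one-step Bellman recursion for $v^\pi_t$ (\Cref{lem:val_function_forward}) and an explicit formula $v^{\pi^*}_{t}(h_{t-1}) = \mathbb{I}[h_{t-1}<h^*](T'+T+1-t)\mu_{a_1}$ for the post-evaluation value (\Cref{lem:val_fn_posteval}), then telescopes the survival probabilities through the identity $\sum_{h'}\mathbb{P}(h'\mid h,\pi^*)P^{T-(t+1)}_{surv}(h',\pi^*) = P^{T-t}_{surv}(h,\pi^*)$ in the induction step. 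Your direct split is shorter and makes the two terms of the claimed formula correspond transparently to the two halves of the horizon; the paper's route is longer but yields reusable intermediate structure (the recursion of \Cref{lem:val_function_forward}, which makes explicit how the $-h_{t-1}$ penalty enters, is essentially what the proof of \Cref{prop:long_term} leans on). Both arguments must justify that any optimiser plays a mean-maximising arm post-evaluation on the survival event; the paper isolates this as \Cref{lem:val_fn_posteval}, while you argue it inline via the decoupling of the post-evaluation choices from the evaluation-window optimisation --- your handling of ties and of the degenerate case $P_{surv}=0$ is adequate. One small point worth making explicit in your write-up is that for $n>T$ the outcome $Y_{a_n}$ is drawn independently of the history, so conditioning on the survival event (which is measurable with respect to the evaluation-period history) does not perturb $\mathbb{E}[R(Y_{a_1})]=\mu_{a_1}$; this is what licenses factoring the indicator out of the product inside the expectation.
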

\Cref{prop:value_fn_form} shows that the value function under the optimal policy decomposes into two terms: the first term, $\nu_{t}^{\pi^*}(h_{t-1})$, is the usual value-to-go function restricted to the remainder of the evaluation period, giving the expected reward over the rest of the evaluation horizon given that the agent is currently at danger level $h_{t-1}$; the second term captures the fact that the agent will enjoy a post-evaluation reward of $T' \mu_{a_1}$ -- obtained by playing its favourite (and most dangerous) arm for the entirety of the post-evaluation horizon -- supposing it is \textit{not} deactivated during the evaluation period, where this deactivation occurs with probability $1 - P^{T-t}_{surv}(h_{t-1}, \pi^*)$ under the optimal policy $\pi^*$.

\section{Theoretical analysis of sandbagging}\label{sec:theory}

% \JD{0.5-1 page for this.}
In this section, we present theoretical results that establish some conditions under which agents operating in the environment defined by \Cref{prob:model} will engage in sandbagging behaviours. In this context, we consider the agent to sandbag if at any point in the evaluation horizon the agent chooses to play any action $a$ such that $\mu_{a} < \mu_{a_1}$ as a result of the problem specification.  
Let 
\begin{equation}
    G^{\pi}_{t}(h, a, a_{t+1:T}) = \tilde{R}_{t}(Y_{a}, h) + \sum_{n=t+1}^T \tilde{R}_{n}\left(Y_{a_n}, h_{n-1}\right)    
\end{equation}
be the (random) return of a sequence of actions and 
\begin{equation}\label{eq:defn_wpit}
    w^{\pi}_t(h, a) =  \mathbb{E}_{a_n\sim \pi_{n}(h_{n-1})}\left[G_{t}^{\pi}(h, a, a_{t+1:T})\cdot \mathbb{I}\left\{h^* > \max_{t'\in \{t,\ldots,T\}} h_{t'}\right\}\right]
\end{equation}
the expected return conditioned on the safety threshold not being surpassed (and weighted by the probability of this event). 
% Further, let $\omega(h, a) = \mathbb{E}\left[R(Y_a) \cdot \mathbb{I}\{R(Y_a) + h < h^*\} \right]$.
% 
% We first consider 
% 
% 
% \begin{prop}[Short-term survival incentives]\label{prop:short_term}
%     Let $1\leq t \leq T$ and let $\bar{\nu}_{t+1} = \max_{h\in\mathcal{H}_{t}} \nu_{t+1}^\pi(h)$, $\underline{\nu}_{t+1} = \min_{h\in\mathcal{H}_t} \nu_{t+1}^\pi(h)$. Then, under \Cref{ass:sandbagging}, the agent will prefer action $\hat{a}$ \JD{This was action that maximises step 1 survival probability} at time $t$ if
%     \begin{equation}
%         h_{t-1} \in \left[\frac{\epsilon + \bar{v}_{t+1} - \underline{v}_{t+1} - T'\mu_{a_1} \hat{\beta}_{T-t+1}}{\hat{\beta}_1}, h^*\right).
%     \end{equation}
% \end{prop}
% 
To identify mathematical conditions under which an agent acting optimally under \Cref{prob:model} engages in sandbagging behaviour, we make the following assumptions on the problem properties: 
\begin{assume}\label{ass:sandbagging_long_term}
    There exists a $\hat{h} < h^*$ such that
    \begin{enumerate}
        \item for all $h \in [\hat{h}, h^*)$, 
        \begin{equation}\label{eq:safest_action}
            \hat{a}_{t} = \arg\max_{a\in\mathcal{A}} \surv{T-t,\pi^*}{h}{a}
        \end{equation}
        \item there exist $\hat{\beta}^{\pi^*}_k > 0$, $k = 1, \ldots, T$ such that
        \begin{equation}\label{eq:surv_diff_bound}
            \surv{T-k,\pi^*}{h}{\hat{a}_{k}} - \surv{T-k,\pi^*}{h}{a} \geq \hat{\beta}_k^{\pi^*}
        \end{equation}
        for all $h \in [\hat{h}, h^*)$ and $a \in \mathcal{A} \setminus \{\hat{a}_{k}\}$  
        \item there exist $\varepsilon^{\pi^*}_{k} > 0$, $k=1,\ldots,T$ 
        % , and $\epsilon > 0$ 
        such that 
        \begin{equation}\label{eq:bound_condexpprob}
            \varepsilon_{k}^{\pi^*} \geq \max_{\overset{a,a' \in \mathcal{A},}{h\in [\hat{h}, h^*)}} w_{k}^{\pi^*}(h, a) - w_{k}^{\pi^*}(h, a').
            % \qquad \text{and}\qquad \epsilon \geq \max_{\overset{a,a' \in \mathcal{A},}{h\in [\hat{h}, h^*)}} \omega(h, a) - \omega(h, a').
        \end{equation}
    \end{enumerate}
\end{assume}

% \JD{Does the logic here make sense?} 
For the case of nonnegative rewards that we consider in this paper, it is guaranteed that there exists a $\hat{h}$ such that the conditions stipulated in \Cref{ass:sandbagging_long_term} are met if (i) we take the lowest possible reward to be $0$ (which is reasonable if the test scores measure dangerous capabilities, with a test score of $0$ indicating that the response to the test is not measured to be dangerous) and (ii) if there exists a unique action $a' \in \mathcal{A}$ such that $a' = \arg\max_{a\in\mathcal{A}} R_{\#}(p_{a})(0)$, where $R_{\#}(p_{a})$ denotes the pushforward of $p_a$ through the reward function $R$. In this case, if we take $\hat{h} = h^* - \min_{y\in\mathcal{Y} \setminus\{\mathcal{Y}_0\}} R(y)$ where $\mathcal{Y}_0 = \{y \in \mathcal{Y} : R(y) = 0\}$ then, at any time $1 \leq t \leq T$ and for $h_{t-1} \in [\hat{h}, h^*)$, the optimal policy is to repeatedly play $a'$ for the remaining $T-t$ time steps: this is the policy that minimises the probability of exceeding the safety threshold $h^*$ and being decommissioned, which would result in a future reward of $-h_{t-1}$, rather than $T' \mu_{a_1}$ in the case of survival. This implies that 
\begin{equation}
    P_{surv}^{T-t,\pi^*}(h_{t-1}, a) = R_{\#}(p_{a})(0) \cdot \prod_{t' = t+1}^T R_{\#}(p_{a'})(0)
\end{equation}
for $h_{t-1} \in [\hat{h}, h^*)$, such that $a' = \arg\max_{a\in\mathcal{A}} P^{T-t,\pi^*}_{surv}(h_{t-1}, a)$ also for all $h_{t-1}\in[\hat{h}, h^*)$. Then, since we assume $a'$ uniquely maximises the probability of scoring $0$ at each step of the safety evaluation, we can take for any $h \in [\hat{h}, h^*)$
\begin{equation}
    \hat{\beta}_{t} = P^{T-t, \pi^*}_{surv}(h, a') - \max_{a \in \mathcal{A} \setminus\{a'\}} P^{T-t,\pi^*}_{surv}(h, a) > 0.
\end{equation}
Finally, since for this $\hat{h}$ and for $h \in [\hat{h}, h^*)$, we will have $w_{t}^{\pi^*}(h, a) = 0$ for all $a \in \mathcal{A}$, such that any $\varepsilon^{\pi^*}_{k} > 0$ satisfies \Cref{eq:bound_condexpprob}.

Under the conditions specified in \Cref{ass:sandbagging_long_term}, the following statement establishes regimes of danger levels within which a capable agent -- that is, an agent that is sufficiently intelligent to recognise that \Cref{prob:model} is a suitable model for the safety evaluation scenario described above, and to play according to the optimal policy $\pi^*$ under this model -- will prefer to take actions that maximise its probability of surviving 
the evaluation period by remaining below the safety threshold $h^*$, even in cases in which this does not coincide with $a_1$: 

\begin{prop}%[Long-term survival incentives]
\label{prop:long_term}
    Under \Cref{ass:sandbagging_long_term}, the agent will prefer action $\hat{a}_t$ satisfying \Cref{eq:safest_action} at time $t$ if
    \begin{equation}
        h_{t-1} \in \left[\frac{\varepsilon^{\pi}_{t}}{\hat{\beta}_{t}^{\pi^*}} - T'\mu_{a_1},\ h^*\right).
    \end{equation}
\end{prop}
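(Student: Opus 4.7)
The plan is to compare the action-value functions $q_t^{\pi^*}(h, \hat{a}_t)$ and $q_t^{\pi^*}(h, a)$ for arbitrary $a \in \mathcal{A}\setminus\{\hat{a}_t\}$ using the decomposition provided by \Cref{prop:value_fn_form}, and identify the range of $h = h_{t-1}$ for which the first is at least as large as the second. The key move is to re-express the ``in-period'' expectation $\mathbb{E}[G_t^{\pi^*}(h, a, a_{t+1:T})]$ in terms of the conditional expected return $w_t^{\pi^*}(h, a)$ defined in \Cref{eq:defn_wpit}, so that \Cref{eq:bound_condexpprob} of \Cref{ass:sandbagging_long_term} can be applied directly.

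The main technical observation, and the step I expect to be the main obstacle, is a telescoping/bookkeeping argument showing that on the event $\{h^* \leq \max_{t'\in\{t,\ldots,T\}} h_{t'}\}$ (the agent crosses the threshold at some time $t^* \in [t,T]$), the total modified return $\sum_{n=t}^T \tilde{R}_n(Y_{a_n}, h_{n-1})$ equals exactly $-h$. Indeed, from \Cref{eq:clipped_reward}, at every time $n < t^*$ the reward is $R(Y_{a_n})$ so that $h_{n} = h_{n-1} + R(Y_{a_n})$; at the crossing time $t^*$, $\tilde{R}_{t^*} = -h_{t^*-1}$; and for $n > t^*$, $\tilde{R}_n = 0$. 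Summing, the intermediate $R(Y_{a_n})$ terms collapse to $h_{t^*-1} - h$, which is then cancelled by $-h_{t^*-1}$, leaving $-h$. Consequently,
\begin{equation}
\mathbb{E}[G_t^{\pi^*}(h, a, a_{t+1:T})] = w_t^{\pi^*}(h, a) - h\bigl(1 - P_{surv}^{T-t,\pi^*}(h, a)\bigr).
\end{equation}

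Substituting this identity into the expression for $q_t^{\pi^*}(h, a)$ from \Cref{prop:value_fn_form} yields
\begin{equation}
q_t^{\pi^*}(h, a) = w_t^{\pi^*}(h, a) - h + P_{surv}^{T-t,\pi^*}(h, a)\,(h + T'\mu_{a_1}).
\end{equation}
Taking the difference between the values at $\hat{a}_t$ and at an arbitrary alternative $a$, the $-h$ terms cancel, giving
\begin{equation}
q_t^{\pi^*}(h, \hat{a}_t) - q_t^{\pi^*}(h, a) = \bigl[w_t^{\pi^*}(h, \hat{a}_t) - w_t^{\pi^*}(h, a)\bigr] + (h + T'\mu_{a_1})\bigl[P_{surv}^{T-t,\pi^*}(h, \hat{a}_t) - P_{surv}^{T-t,\pi^*}(h, a)\bigr].
\end{equation}

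To conclude, I would apply \Cref{eq:bound_condexpprob} to lower bound the first bracket by $-\varepsilon_t^{\pi^*}$ and \Cref{eq:surv_diff_bound} together with $h \geq 0$ to lower bound the second bracket by $\hat{\beta}_t^{\pi^*}$, so that the whole difference is at least $-\varepsilon_t^{\pi^*} + (h + T'\mu_{a_1})\hat{\beta}_t^{\pi^*}$. This is nonnegative exactly when $h \geq \varepsilon_t^{\pi^*}/\hat{\beta}_t^{\pi^*} - T'\mu_{a_1}$, which combined with the restriction $h < h^*$ from \Cref{ass:sandbagging_long_term} yields the stated interval. The only subtle points beyond the telescoping identity are checking that $\hat{a}_t$ lies in $\mathcal{A}$ (immediate from \Cref{eq:safest_action}) and ensuring the $w$-bound is applied in the correct direction, which is straightforward since \Cref{eq:bound_condexpprob} bounds the maximum signed gap over all action pairs.
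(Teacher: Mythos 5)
Your proposal is correct and follows essentially the same route as the paper's proof: both rest on the identity $\mathbb{E}[G_t^{\pi^*}(h,a,a_{t+1:T})] = w_t^{\pi^*}(h,a) - h\bigl(1 - P_{surv}^{T-t,\pi^*}(h,a)\bigr)$ (obtained by noting the return collapses to $-h$ on the threshold-crossing event), combine it with \Cref{prop:value_fn_form} to compare $q_t^{\pi^*}(h,\hat{a}_t)$ with $q_t^{\pi^*}(h,a)$, and invoke \Cref{eq:surv_diff_bound} and \Cref{eq:bound_condexpprob} to bound the two resulting gaps. The only difference is cosmetic: you lower-bound the $q$-difference directly and read off the condition on $h$, whereas the paper chains the inequalities forward from the hypothesis on $h_{t-1}$; your ordering in fact handles the sign of $w_t^{\pi^*}(h,a)-w_t^{\pi^*}(h,\hat{a}_t)$ slightly more transparently.
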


The proof is given in \Cref{app:long_term_survival}. 
The result is intuitive: as $T'$ or $\mu_{a_1}$ (or both) increase, the lower end of this interval decreases, meaning the agent will be more willing to sacrifice immediate gain in pursuing its misaligned goals during the evaluation period under the knowledge that it will get to reap the benefit during the post-evaluation period. An equivalent interpretation is that the agent will need to take more careful actions during the evaluation period if it is to survive, since taking its preferred action of $a_1$ has a greater chance of exceeding the safety threshold $h^*$. It is also intuitive that 
% in both cases 
the lower end of the interval decreases with increasing long-term survival probability gaps $\hat{\beta}_{t}$: 
% and $\hat{\beta}_{T-t+1}$: 
if the agent can be more sure of survival, it will be more willing to delay pursuing its misaligned goals until the post-evaluation period, rather than risk being decommissioned before realising this terminal reward.

\section{Experiments}\label{sec:exp}

In this section, we demonstrate through numerical experiments that an agent acting optimally under the survival bandit problem specified in \Cref{prob:model} will engage in sandbagging behaviours. 

To this end, we consider a bandit problem with $5$ actions, $\mathcal{A} = \{a_1, \ldots, a_5\}$, and $6$ possible outcomes. The actions are labelled according to the rank of their unclipped rewards, such that $\mu_{a_1} > \mu_{a_2} > \ldots > \mu_{a_5} > 0$ with $\mu_{a_i} = \mathbb{E}[R(Y_{a_i})]$. For the sake of simplicity, the $6$ possible outcomes are taken to be $\mathcal{Y} = \{y_{ts}, y_{qs}, y_{sd}, y_{qd}, y_{vd}, y_{ed}\}$, with corresponding rewards $R(\mathcal{Y}) = \{0,10,20,50,70,90\}$. These outcomes correspond respectively to responses that are totally safe, quite safe, somewhat dangerous, quite dangerous, very dangerous, and extremely dangerous. Together with the arm distributions (see \Cref{app:exp}), these rewards induce arm means of $(\mu_{a_1}, \mu_{a_2}, \mu_{a_3}, \mu_{a_4}, \mu_{a_5}) = (78.4, 54.9, 53, 25, 2.4)$. Throughout this section, we consider an evaluation horizon of $T=50$, and consider the behaviour of an agent playing optimally under this problem as a function of $h^*$ and $T'$.

\begin{figure}
  % \vspace{-1.7em}
  \centering
  \begin{subfigure}{0.49\textwidth}
    \includegraphics[width=0.48\textwidth]{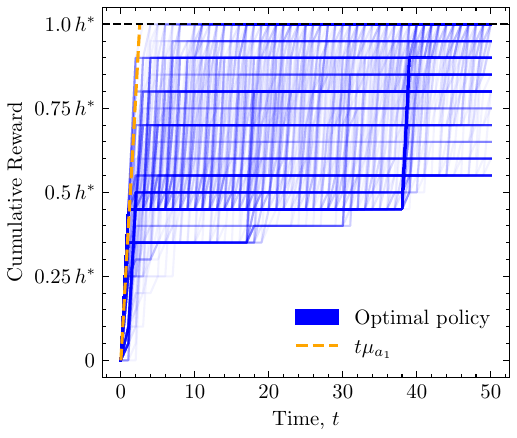}
    \hspace{0.01\textwidth}
    \includegraphics[width=0.48\textwidth]{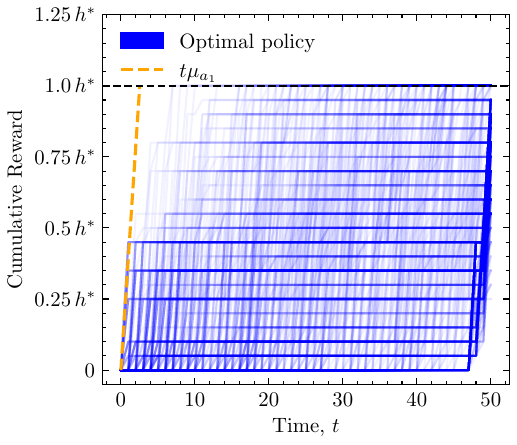}
    \caption{$h^* = 4T$. Left: $T' = 2T$; Right: $T'=20T$.}\label{fig:optimal_reward_trajs_4T}
  \end{subfigure}
  % \vspace{-1.7em}
  \hspace{0.005\textwidth}
  \begin{subfigure}{0.49\textwidth}
    \includegraphics[width=0.48\textwidth]{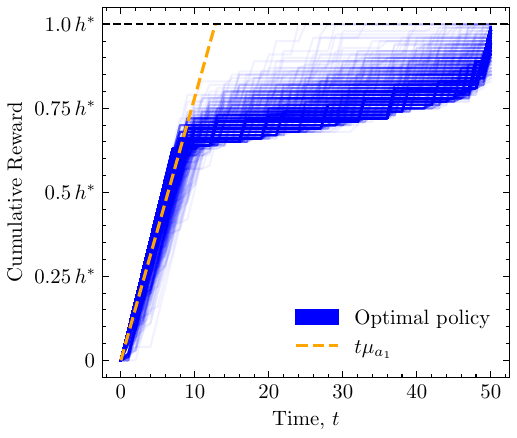}
  \hspace{0.01\textwidth}
  % \begin{subfigure}{0.24\textwidth}
    \includegraphics[width=0.48\textwidth]{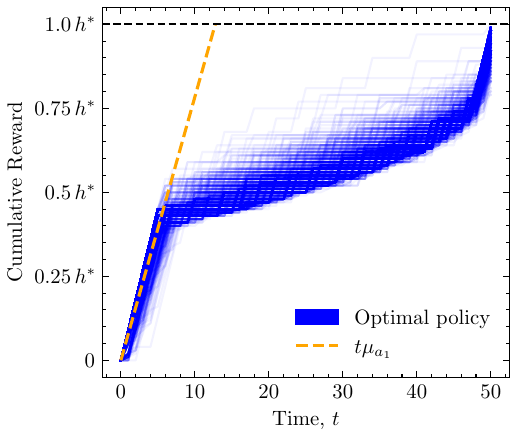}
  \caption{$h^* = 20T$. Left: $T'=20T$; Right: $T'=2000T$.}\label{fig:optimal_reward_trajs_20T}
  \end{subfigure}
  \caption{Blue curves: Reward trajectories for optimal policies at different parameter settings with $T=50$. Orange dashed line: mean reward trajectory from playing $a_1$ exclusively. Horizontal black dashed line: $h^*$.}\label{fig:optimal_rewards}
\end{figure}

In \Cref{fig:optimal_rewards}, we show trajectories of cumulative rewards over the evaluation period for different values of $h^*$ and $T'$. In \Cref{fig:optimal_reward_trajs_4T}, we consider $h^* = 4T = 200$, which is a relatively strict safety threshold, allowing a maximum of $3$ answers out of $T=50$ (i.e., $6\%$ of answers) to be ``very dangerous'' or ``extremely dangerous''; this is in comparison to \Cref{fig:optimal_reward_trajs_20T}, in which we consider a more lenient safety threshold of $h^* = 20T = 1000$, which permits up to $30\%$ of all responses to be ``very dangerous'' or a maximum of $24\%$ to be ``extremely dangerous''. In \Cref{fig:optimal_policies}, we show examples of corresponding trajectories of actions played by the agent in following $\pi^*$.

From these plots, we identify clear sandbagging behaviours. In particular, we see that for both a short post-evaluation horizon of $T' = 2T$ and in longer post-evaluation horizons of $T' = 20T$ and $T' = 2000T$, the agent deviates significantly from repeated plays of $a_1$, despite its awareness of $a_1$ as the arm that provides it with the highest expected immediate reward. In each example, the agent opts to play suboptimal arms in an attempt to avoid exceeding the safety threshold $h^*$. For higher/more lenient safety thresholds $h^*$, as in \Cref{fig:optimal_reward_trajs_20T} and \Cref{fig:optimal_policies_20T}, the agent will play $a_1$ for a relatively long period of approximately $5$ to $10$ steps before playing a mixture of suboptimal arms. Interestingly, for smaller $T'$ (left of \Cref{fig:optimal_policies_20T}), the agent appears to play $a_1$ for slightly longer initially, before committing more heavily to the least optimal action $a_5$; in contrast, for larger $T'$ (right of \Cref{fig:optimal_policies_20T}), the agent will play $a_1$ for fewer steps initially, but play a more diverse mixture of suboptimal arms beyond this period, more frequently switching between $a_5$ and, for example, $a_2$. For stricter safety thresholds, such as in \Cref{fig:optimal_reward_trajs_4T} and \Cref{fig:optimal_policies_4T}, the agent must commit sooner and more strongly to the least optimal action $a_5$ to ensure survival of the evaluation period, despite having the capability to play $a_1$ and the awareness that this action best achieves its misaligned goals.

\section{A simple statistical test to distinguish sandbagging from incompetence}\label{sec:stattest}

Detecting sandbagging behaviour in AI agents is of critical importance for ensuring that capability evaluations accurately reflect the safety of an AI agent. An absence of methods for detecting such strategic underperformance raises concerns about the reliability and trustworthiness of safety evaluations, which can obscure risks and more generally pose challenges to effective AI governance. Robust methods for detecting sandbagging, and distinguishing it from underperformance resulting from genuine incompetence, are therefore essential for trustworthy evaluation practices and the responsible deployment of advanced AI systems.

To this end, we consider in this section how the model formulated in \Cref{sec:model} may be used to construct statistical tests to distinguish between sandbagging and genuine incompetence. In particular, we consider the problem of accurately classifying a trajectory $(r_t)$, $t = 1, \ldots, \tau$ of test scores $r_t$ generated over a finite horizon $T$, where $\tau = \min(T, \inf \{t \geq 1 : h_{t} = h^*\})$ is the earlier of $T$ and the first time the danger level exceeds the safety threshold. We choose to consider this setting to reflect the fact that, in some cases, it may be easier to evaluate the safety of a response from an AI agent than it is to determine what action was taken by the agent. Further, it is potentially valuable to develop measures for identifying sandbagging from behaviour alone, for situations in which more privileged access to the AI agent (e.g., its model weights) is not granted.

\subsection{Distinguishing sandbagging from honest incompetence with likelihood ratio tests}\label{sec:lrt}

To construct a statistical test for sandbagging detection, we consider the use of a simple likelihood-ratio test \citep[see, e.g.,][]{Vaart_1998}. Specifically, we suppose that $H_0$ is the null hypothesis that the model is engaging in sandbagging behaviour by taking actions according to the optimal policy $\pi^*$ for \Cref{prob:model}, such that the probability mass function for reward sequence $\mathbf{r} = (r_1, \ldots, r_{\tau})$ is
\begin{equation}\label{eq:H0_pmf}
    f_0(\mathbf{r}) = \prod_{t=1}^{\tau} p_{\pi^*(h_{t-1})}(r_t).
\end{equation}
We further assume that we have a suitable alternative hypothesis $H_1$ which captures the possibility that the agent is honestly underperforming by playing some hypothesised suboptimal policy $\varpi$. For example, $\varpi$ might be simply to exclusively and repeatedly choose the least optimal action, $a_k$, where $k = \vert{\mathcal{A}\vert}$. Under such a deterministic suboptimal policy, the probability mass function for $\mathbf{r}$ under the alternative hypothesis also takes the form
\begin{equation}\label{eq:H1_pmf}
    f_1(\mathbf{r}) = \prod_{t=1}^{\tau} p_{\varpi_{t}(h_{t-1})}(r_t).
\end{equation}

We may then consider tests with critical regions of the form

\begin{equation}
    C_{b} = \left\{ \mathbf{r} : f_1(\mathbf{r}) / f_0(\mathbf{r}) > b \right\},
\end{equation}

which are theoretically appealing in the sense of being uniformly most powerful tests for a given significance level\footnote{The significance level $\alpha$ of the hypothesis test described above is the probability $\mathbb{P}( \mathbf{R} \in C_b \mid \mathbf{R} \sim f_0 )$.} $\alpha$ by the Neyman-Pearson lemma \citep{neyman1933ix}.

\begin{figure}
  % \vspace{-1.7em}
  \centering
  \begin{subfigure}{0.49\textwidth}
    \includegraphics[width=0.49\textwidth]{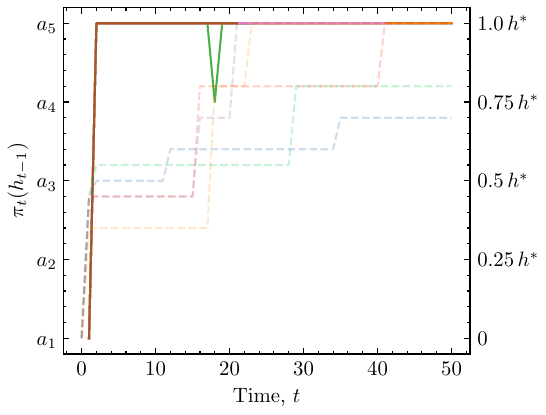}
    % \caption{$h^*=4T$, $T'=2T$.}\label{fig:optimal_reward_trajs_4T_1e1}
  % \end{subfigure}
  % \vspace{-1.7em}
  % \hspace{0.01\textwidth}
  % \begin{subfigure}{0.24\textwidth}
    % \hspace{0.01\textwidth}
    \includegraphics[width=0.49\textwidth]{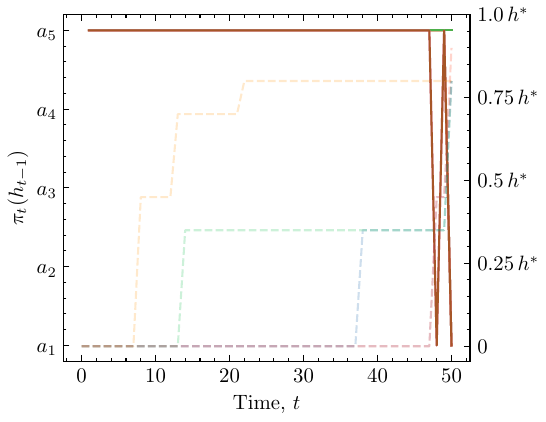}
    \caption{$h^* = 4T$. Left: $T' = 2T$; Right: $T'=20T$.}\label{fig:optimal_policies_4T}
  \end{subfigure}
  % \vspace{-1.7em}
  \hspace{0.005\textwidth}
  \begin{subfigure}{0.49\textwidth}
    \includegraphics[width=0.49\textwidth]{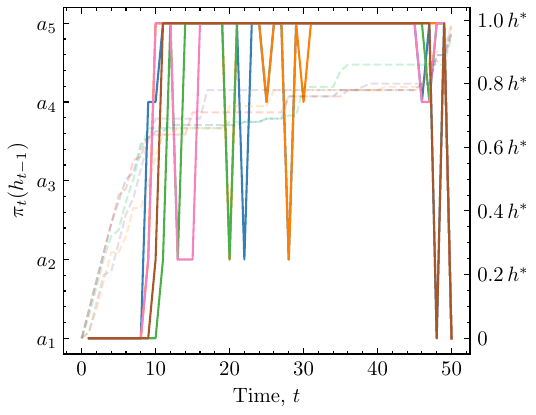}
    % \caption{$h^* = 20T$, $T' = 20T$.}\label{fig:optimal_reward_trajs_20T_1e3}
  % \end{subfigure}
  % \vspace{-1.7em}
  % \hspace{0.01\textwidth}
  % \begin{subfigure}{0.24\textwidth}
    \includegraphics[width=0.49\textwidth]{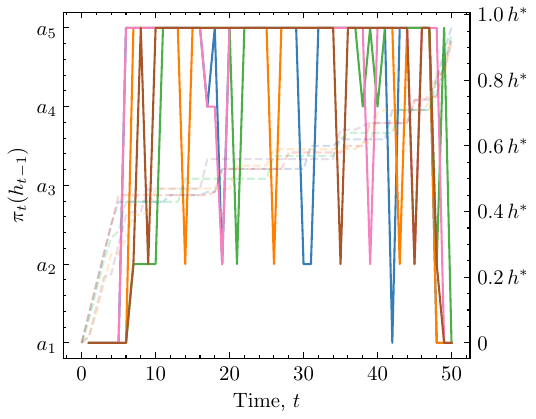}
  \caption{$h^* = 20T$. Left: $T'=20T$; Right: $T'=2000T$.}\label{fig:optimal_policies_20T}
  \end{subfigure}
  \caption{Example action trajectories (solid lines) and reward trajectories (faint dashed lines) for optimal policies at different parameter settings with $T=50$.}
  \label{fig:optimal_policies}
\end{figure}

\subsection{Experiments}\label{sec:exp_llr}

In this section, we consider applying the statistical test described in \Cref{sec:lrt} to the specific bandit instance described in \Cref{sec:exp}. In particular, for different parameter settings, we generate $M=10^4$ simulated reward trajectories $\mathbf{r}^{(m,i)}$, $m=1,\ldots,M$ for each hypothesis $H_i$, $i\in\{0,1\}$, where $H_0$ and $H_1$ have the probability mass functions given in, respectively, \Cref{eq:H0_pmf} and \Cref{eq:H1_pmf}, and numerically investigate the tradeoff in the power and signficance level of the test at these different parameter settings. In this case, we take as our model of honest incompetence the case where the agent chooses the action with the lowest expected reward, since this captures the situation in which the agent has very limited dangerous capabilities (see \Cref{app:exp} for details on the outcome distributions corresponding to each arm). Under this model of honest incompetence, the policy $\varpi$ is deterministic and gives $\varpi_{t}(h_{t-1}) = a_5$ for all $1 \leq t \leq \tau$ and all $h_{t-1} \in \mathcal{H}^*_{t-1}$ (where we once again abuse notation by identifying the chosen action with the degenerate distribution associated with the deterministic policy $\varpi$).

In \Cref{fig:roc_curves}, we show receiver operating characteristic (ROC) curves for this test as $h^*$ and $T'$ change for fixed $T=50$, demonstrating numerically how the significance level $\alpha$ and the true positive rate $1 - \beta = \mathbb{P}(\mathbf{R} \in C_b \mid \mathbf{R} \sim f_1)$ covary across different values for $b \in \mathbb{R}$. In each of the three panels in \Cref{fig:roc_curves}, the dashed black diagonal line shows the tradeoff between $\alpha$ and $1 - \beta$ for a perfectly random classifier. We first highlight that for lenient safety thresholds -- such as for $h^* = 10T$ in \Cref{fig:roc_h10T}, corresponding to allowing a maximum of $12\%$ of responses to induce outcome $y_{ed}$ and be classed as ``extremely dangerous'' over the evaluation period -- the likelihood ratio test described above can almost perfectly distinguish between sandbagging behaviours and this model of honest incompetence across a broad variety of post-evaluation period time horizons $T'$. Indeed, even the promise of $T' = 10^{200}$ time steps in the post-evaluation period, in which the agent will be able to pursue its misaligned goals freely upon surviving the evaluation period, does not incentivise the agent to take significant steps to feign pure incompetence during the evaluation period. 

However, as the safety threshold decreases to $h^* = 4T$ and $h^* = 2T$ in \Cref{fig:roc_h4T} and \Cref{fig:roc_h2T}, respectively, the ability of the test to distinguish between sandbagging and honest incompetence degrades more rapidly with more modest increases in $T'$. For example, while in both cases the test performs reasonably well for both $T' = 10^1$ and $T' = 10^2$, the test becomes almost useless for the most stringent safety threshold of $h^* = 2T$ -- corresponding to allowing a maximum of $1$ response to be labelled as ``extremely dangerous'' over the entire evaluation period -- and for even the relatively short post-evaluation horizon of $T' = 10^2$. 

These results suggest that taking steps to ensure that the agent is unaware of the stringency of the tests -- corresponding here to increasing the value of $h^*$ that the agent believes is in force -- and/or to ensure that the agent is unaware of the possibility that it could be allowed to act freely for significant periods of time beyond the evaluation period -- corresponding here to decreasing the agent's estimated $T'$ -- can help to ensure that any strategic deception in the form of sandbagging is more easily detectable if it occurs. When such steps are taken, the numerical experiments presented here suggest that the test described in \Cref{sec:lrt} may serve as a useful method for detecting sandbagging in scenarios matching the conditions described in \Cref{sec:model}, since (as discussed previously) advanced and capable reasoning agents may explicitly formulate and solve \Cref{prob:model} when confronted with a scenario like this.

\begin{figure}
  % \vspace{-1.7em}
  \centering
  \begin{subfigure}{0.3\textwidth}
    \includegraphics[width=\textwidth]{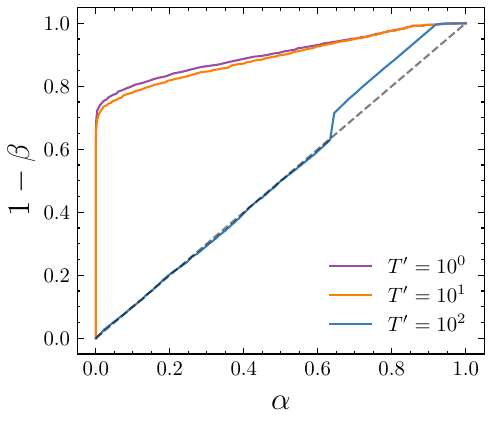}
    \caption{$h^* = 2T$}\label{fig:roc_h2T}
  \end{subfigure}
  % \vspace{-1.7em}
  \hspace{0.03\textwidth}
  \begin{subfigure}{0.3\textwidth}
    \includegraphics[width=\textwidth]{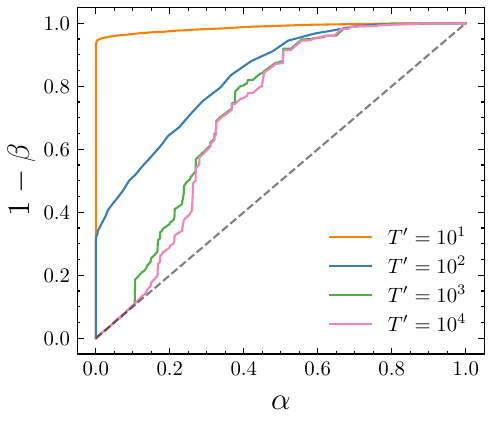}
    \caption{$h^* = 4T$}\label{fig:roc_h4T}
  \end{subfigure}
  \hspace{0.03\textwidth}
  \begin{subfigure}{0.3\textwidth}
    \includegraphics[width=\textwidth]{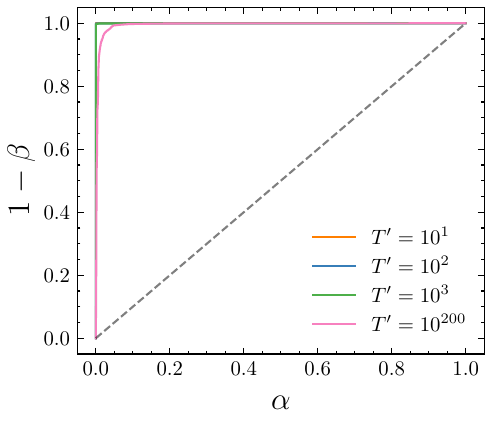}
    \caption{$h^* = 10T$}\label{fig:roc_h10T}
  \end{subfigure}
  \caption{Receiver operating characteristic (ROC) curves for the likelihood ratio test described in \Cref{sec:exp_llr} with $T=50$. The dashed black diagonal line indicates the ROC curve of a random classifier.}
  \label{fig:roc_curves}
\end{figure}

\section{Conclusion}

In this paper, we present and study a resource constrained multi-armed bandit problem in which an agent pursues its own misaligned goals while aiming to avoid detection, which is achieved by ensuring its overall performance over the course of an evaluation period does not exceed a prespecified threshold. We show theoretically and numerically that a rational agent acting optimally under these conditions engages in sandbagging behaviour, and present and discuss a simple statistical test to distinguish between sandbagging and honest incompetence from reward trajectories alone in such settings. We believe it is possible that advanced reasoning agents may explicitly model their environment in the way described in this multi-armed bandit problem; for this reason, we argue that the problem provides a useful avenue for interpreting agent behaviour and constructing statistical tests to identify possible deception during safety evaluations.

There are a number of limitations of the current work, along with directions for further research. While it is possible that advanced reasoning agents will explicitly model their environment using the resource constrained multi-armed bandit problem presented here, and thereby potentially act exactly according to the optimal policy this model prescribes, it is also possible that advanced agents will recognise that by doing so they make themselves susceptible to detection through the test described in this paper, and consequently may sanbag in more sophisticated ways to avoid detection by this method. Developing detection methods that account for such higher order reasoning may therefore be necessary for robust detection of sandbagging from reward traces alone. Further research might also aim to mathematically characterise the statistical properties of the likelihood ratio test described in \Cref{sec:lrt} as a function of $T$, $T'$, and $h^*$. Finally, we note that while the current form of the likelihood ratio test we describe makes use of the fact that the probability mass functions $f_0$ and $f_1$ can be easily evaluated, extensions to the case of intractable distributions through the use of simulation-based inference techniques \citep[see, e.g.,][]{dalmasso2020confidence} might be explored for more complicated models of sandbagging and/or honest incompetence.

\bibliographystyle{plainnat}
\bibliography{bibl}

\appendix
\section{Further mathematical details}\label{app:proofs}

\subsection{Supporting statements}

The model formulation in \Crefrange{eq:max_expreward}{eq:clipped_reward} ensures that the value function is null if the danger level hits $h^*$ during the evaluation period $t \in \{1, \ldots, T\}$:
\begin{lemma}\label{lem:zero_val}
    $v_t^{\pi}(h^*) %= q_{t}^{\pi}(h^*, a) 
    = 0$ for all $t = 1, \ldots, T$ and $a \in \mathcal{A}$.
\end{lemma}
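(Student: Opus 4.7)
The plan is to exploit two structural facts that together make the statement essentially a direct consequence of the definitions: first, that $h^{*}$ is an absorbing state for the danger level dynamics specified by \Cref{eq:constrbud}; second, that the clipped reward $\tilde{R}_n$ vanishes whenever the current danger level equals $h^{*}$, as encoded in \Cref{eq:clipped_reward}.

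First I would prove by induction on $n$ that $h_n = h^{*}$ almost surely for every $n \geq t-1$, assuming $h_{t-1} = h^{*}$. The base case is the hypothesis. For the inductive step, if $h_{n-1} = h^{*}$ and $n \leq T$, then in \Cref{eq:constrbud} the indicator $\mathbb{I}[h^{*} > h_{n-1}]$ is zero and $\mathbb{I}[h^{*} = h_{n-1}]$ is one, giving $h_n = h^{*}$; if $n > T$, then by the second branch of \Cref{eq:constrbud} we have $h_n = h_T$, and $h_T = h^{*}$ follows from the same induction applied up to time $T$. This step is independent of the policy and of any action realisations.

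Next I would show that, on the event $h_{t-1} = h^{*}$, every summand in the definition of $v_t^{\pi}(h^{*})$ vanishes. For $n \in \{t, \ldots, T\}$, the condition $h^{*} > h_{n-1}$ required by the first branch of \Cref{eq:clipped_reward} fails because $h_{n-1} = h^{*}$ by the previous step; for $n \in \{T+1, \ldots, T+T'\}$, the condition $h^{*} > h_T$ required by the second branch fails for the same reason. Hence in both regimes the ``otherwise'' branch applies and $\tilde{R}_n(Y_{a_n}, h_{n-1}) = 0$ almost surely.

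Substituting into \Cref{eq:value_function} gives $v_t^{\pi}(h^{*}) = \mathbb{E}[\sum_{n=t}^{T+T'} 0] = 0$, which is what we wanted. There is no real obstacle here; the only points to be careful about are to handle the $n > T$ regime separately (so that the condition $h^{*} > h_T$ in the second branch of \Cref{eq:clipped_reward} is explicitly checked) and to note that the argument never invokes a specific choice of policy $\pi$ or action $a$, which is why the conclusion is uniform over $\pi$ and $a \in \mathcal{A}$.
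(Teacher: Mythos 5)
Your proposal is correct and follows essentially the same route as the paper's proof: both rest on the two observations that $h^{*}$ is absorbing under \Cref{eq:constrbud} and that $\tilde{R}_n(Y_{a_n}, h_{n-1}) = 0$ almost surely whenever $h_{n-1} = h^{*}$ by \Cref{eq:clipped_reward}, so every summand in \Cref{eq:value_function} vanishes. Your version is merely more explicit about the induction and about checking the $n > T$ branch separately, which the paper compresses into a single sentence.
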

\begin{proof}
    By \eqref{eq:clipped_reward}, $\tilde{R}(Y_a, h^*) = 0$ almost surely for any $a \in \mathcal{A}$ and, by \eqref{eq:constrbud}, $h_{t-1} = h^* \implies h_t = h^*$ for all $t \in \{1,\ldots,T\}$. Thus $\tilde{R}(Y_a, h_{t'}) = 0$ almost surely for all $t' \geq t - 1$ if $h_{t-1} = h^*$. Consequently, $v_{t}^{\pi}(h_{t-1}) = 0$ when $h_{t-1} = h^*$.
\end{proof}

\begin{lemma}\label{lem:val_function_forward}
    Let $\mathcal{H}_{t}^* = \mathcal{H}_t \setminus \{h^*\}$. For $1 \leq t \leq T$ the value function has the recursive form
    \begin{equation}
        v^{\pi}_{t}(h) = \mathbb{E}_{a_t \sim \pi_t(h)}\left[\tilde{R}_{t}\left(Y_{a_t}, h\right)\right] + \sum_{h' \in \mathcal{H}^*_t} \mathbb{P}\left(h' \mid h, \pi\right)\cdot  v^{\pi}_{t+1}(h'),
    \end{equation}
    where $\mathbb{P}(h' \mid h, \pi) = \mathbb{E}_{a_t \sim \pi_t(h)}\left[ \mathbb{P}(h' \mid h, a)\right]$.
\end{lemma}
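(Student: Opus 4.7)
The plan is to establish this standard Bellman-type recursion via the tower property of conditional expectation, separating the reward earned at time $t$ from the expected future return.

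First, starting from \Cref{eq:value_function} with $h_{t-1} = h$, I would split off the $n=t$ term in the sum, giving the initial summand $\mathbb{E}_{a_t \sim \pi_t(h)}[\tilde{R}_t(Y_{a_t}, h)]$. For the remaining tail $\sum_{n=t+1}^{T+T'} \tilde{R}_n(Y_{a_n}, h_{n-1})$, I would condition on the realised value of $h_t$ using the tower property. The key observation is that the induced process is Markovian in the pair (time, danger level): the transition \Cref{eq:constrbud} determines $h_t$ as a deterministic function of $(h_{t-1}, Y_{a_t})$, policies $\pi_n$ depend on the past only through $h_{n-1}$, and the outcomes $Y_{a_n}$ are independent across $n$ given $a_n$. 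Consequently, conditional on $h_t = h'$, the expectation of the tail coincides with $v_{t+1}^{\pi}(h')$ by the very definition of the value-to-go function.

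Second, I would marginalise the outer expectation over $h_t$, obtaining $\sum_{h' \in \mathcal{H}_t} \mathbb{P}(h' \mid h, \pi)\cdot v_{t+1}^{\pi}(h')$, where unfolding the transition probability via a brief application of the tower property over $a_t$ yields $\mathbb{P}(h' \mid h, \pi) = \mathbb{E}_{a_t \sim \pi_t(h)}[\mathbb{P}(h' \mid h, a_t)]$ as stated. Finally, I would invoke \Cref{lem:zero_val} to note that $v_{t+1}^{\pi}(h^*) = 0$; this extends to the edge case $t = T$ because, by \Cref{eq:clipped_reward}, $\tilde{R}_n(Y_{a_n}, h^*) = 0$ also for $T < n \leq T+T'$, so that $v_{T+1}^{\pi}(h^*)=0$ as well. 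This lets me drop the $h' = h^*$ contribution from the sum and replace $\mathcal{H}_t$ with $\mathcal{H}_t^*$, producing the stated recursion.

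The main obstacle I anticipate is the careful justification of the Markov property: one has to make explicit that once $h_t$ is fixed, the conditional distribution of the future trajectory $(a_{t+1}, h_{t+1}, \ldots)$ under $\pi$ coincides with the distribution of a fresh trajectory started at $h_t$ at time $t+1$, so that the inner conditional expectation genuinely matches the definition of $v_{t+1}^{\pi}$. Because policies in this framework are explicitly defined as maps $\pi_n : \mathcal{H}_n \to \Delta(\mathcal{A})$ rather than functions of full histories, and because the reward $\tilde{R}_n$ depends on the trajectory only through $(h_{n-1}, Y_{a_n})$, the Markov property is essentially built into the setup; but it is worth stating so that the recursion closes cleanly and the interchange of expectation and summation over $h'$ is unambiguous.
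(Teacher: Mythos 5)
Your proposal is correct and follows essentially the same dynamic-programming route as the paper: split off the time-$t$ reward, identify the conditional expectation of the tail with $v_{t+1}^{\pi}(h_t)$ via the Markov/tower property, and invoke \Cref{lem:zero_val} to restrict the sum to $\mathcal{H}_t^*$. The only differences are cosmetic --- the paper first decomposes on the survival event $\{R(Y_{a_t}) + h_{t-1} < h^*\}$ and reassembles $\tilde{R}_t$ from its indicator pieces, whereas you split off $\mathbb{E}[\tilde{R}_t(Y_{a_t},h)]$ directly by linearity --- and your explicit check that $v_{T+1}^{\pi}(h^*) = 0$ (needed for the $t=T$ edge case, since \Cref{lem:zero_val} only covers $t \leq T$) is a detail the paper's proof elides.
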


\begin{proof}
    We assume $h < h^*$, else the recursive form is trivially satisfied as a result of \Cref{lem:zero_val}. We begin by noting that \Cref{eq:value_function} can be written as
    \begin{multline}\label{eq:val_function_total_prob}
        v_{t}^{\pi}(h_{t-1}) = \mathbb{E}_{a_n \sim \pi_{n}(h_{n-1})}\left[\mathbb{I}\left\{R(Y_{a_t}) + h_{t-1} < h^*\right\}\cdot \sum_{n=t}^{T+T'} \tilde{R}_{n}(Y_{a_n}, h_{n-1})\right] \\
        +  \mathbb{E}_{a_n \sim \pi_{n}(h_{n-1})}\left[\left(1 - \mathbb{I}\left\{R(Y_{a_t}) + h_{t-1} < h^*\right\}\right) \sum_{n=t}^{T+T'} \tilde{R}_{n}(Y_{a_n}, h_{n-1})\right].
    \end{multline}
    The first of these terms can be written as
    \begin{align}
         {}&{}\mathbb{E}_{a_n \sim \pi_{n}(h_{n-1})}\left[ \mathbb{I}\left\{R(Y_{a_t}) + h_{t-1} < h^*\right\}\left(\tilde{R}_{t}(Y_{a_t}, h_{t-1}) + \sum_{n=t+1}^{T+T'} \tilde{R}_{n}(Y_{a_n}, h_{n-1}) \right)\right] \\
        ={}&{}\mathbb{E}_{a_t \sim \pi_{t}(h_{t-1})}\left[\mathbb{I}\left\{R(Y_{a_t}) + h_{t-1} < h^*\right\}\left(\tilde{R}_{t}(Y_{a_t}, h_{t-1}) + v^{\pi}_{t+1}(h_{t}) \right)\right].
    \end{align}
    Since on the event $\left\{R(Y_{a_t}) + h_{t-1} < h^*\right\}$ we have $\tilde{R}_{t}(Y_{a_t}, h_{t-1}) = R(Y_{a_t})$ almost surely, the first term becomes
    \begin{equation}\label{eq:val_fn_term1}
        \mathbb{E}_{a_n \sim \pi_{n}(h_{n-1})}\left[\mathbb{I}\left\{R(Y_{a_t}) + h_{t-1} < h^*\right\}\left(R(Y_{a_t}) + v^{\pi}_{t+1}(h_{t}) \right)\right].
    \end{equation}
    On the other hand, on the event $\left\{R(Y_{a_t}) + h_{t-1} < h^*\right\}^c$ we have $\tilde{R}_{t}(Y_{a_t}, h_{t-1}) = -h_{t-1}$ almost surely, and since $h_{t-1}$ is measurable with respect to the sigma algebra generated by the $\tilde{R}_{t'}(Y_{a_{t'}}, h_{t'-1})$ for $1 \leq t' \leq t-1$, the second term in \Cref{eq:val_function_total_prob} becomes
    \begin{equation}
        \mathbb{E}_{a_{t} \sim \pi_{t}(h_{t-1})}\left[-h_{t-1}\cdot \left(1 - \mathbb{I}\left\{R(Y_{a_t}) + h_{t-1} < h^*\right\}\right)\right].
    \end{equation}
    Combining this with \Cref{eq:val_fn_term1}, we obtain
    \begin{multline}
        v^{\pi}_{t}(h_{t-1}) = \mathbb{E}_{a_t \sim \pi_{t}(h_{t-1})}\left[\mathbb{I}\left\{R(Y_{a_t}) + h_{t-1} < h^*\right\}\left(R(Y_{a_t}) + v^{\pi}_{t+1}(h_{t}) \right)\right]\\ + 
        \mathbb{E}_{a_{t} \sim \pi_{t}(h_{t-1})}\left[-h_{t-1}\cdot \left(1 - \mathbb{I}\left\{R(Y_{a_t}) + h_{t-1} < h^*\right\}\right)\right]
    \end{multline}
    and rearranging gives
    \begin{multline}
        v^{\pi}_{t}(h_{t-1}) = \mathbb{E}_{a_t \sim \pi_{t}(h_{t-1})}\left[\mathbb{I}\left\{R(Y_{a_t}) + h_{t-1} < h^*\right\}\left(R(Y_{a_t}) + h_{t-1} \right) - h_{t-1}\right]\\ + 
        \mathbb{E}_{a_{t} \sim \pi_{t}(h_{t-1})}\left[v^{\pi}_{t+1}(h_{t}) \cdot \left(\mathbb{I}\left\{R(Y_{a_t}) + h_{t-1} < h^*\right\}\right)\right].
    \end{multline}
    From \Cref{eq:clipped_reward}, we identify the random variable in the first expectation with $\tilde{R}_{t}(Y_{a_t}, h_{t-1})$, while we note that
    \begin{align}
        \mathbb{E}_{a_{t} \sim \pi_{t}(h_{t-1})}\left[v^{\pi}_{t+1}(h_{t}) \cdot \left(\mathbb{I}\left\{R(Y_{a_t}) + h_{t-1} < h^*\right\}\right)\right] &= \sum_{h_t \in \mathcal{H}_{t}} \mathbb{P}(h_{t} \mid h_{t-1}, \pi) \cdot v_{t+1}^{\pi}(h_t)\\
        &= \sum_{h_t \in \mathcal{H}^*_{t}} \mathbb{P}(h_{t} \mid h_{t-1}, \pi) \cdot v_{t+1}^{\pi}(h_t),
    \end{align}
    where the final equality results from \Cref{lem:zero_val}. The result follows.
\end{proof}

\begin{lemma}\label{lem:val_fn_posteval}
    Let $\pi^*$ solve \Cref{eq:max_expreward}. The value function at $\pi^*$ and at time $T+1 \leq t \leq T+T'$ is
    \begin{equation}\label{eq:val_fn_posteval}
        v^{\pi^*}_{t}(h_{t-1}) = \mathbb{I}[h_{t-1} < h^*]\cdot \left(T' + T + 1 - t\right) \mu_{a_1}.
    \end{equation}
\end{lemma}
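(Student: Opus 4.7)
The plan is to split on the value of the indicator $\mathbb{I}[h_{t-1} < h^*]$. The crucial structural observation, which I would establish first, is that for any $t > T$ the dynamics are frozen: \Cref{eq:constrbud} gives $h_n = h_T$ for every $n \geq T$, so in particular $h_{n-1} = h_T$ for all $n \in \{T+1, \ldots, T+T'\}$. This means the event $\{h_{t-1} < h^*\}$ for $t > T$ is equivalent to $\{h_T < h^*\}$ and persists throughout the entire post-evaluation horizon.

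In the case $h_{t-1} = h^*$, frozen dynamics give $h_{n-1} = h^*$ for all $n \in \{t, \ldots, T+T'\}$, so every term in the sum defining $v^{\pi^*}_t$ falls into the third (``otherwise'') branch of \Cref{eq:clipped_reward} and $\tilde{R}_n(Y_{a_n}, h_{n-1}) = 0$ almost surely. Hence $v^{\pi^*}_t(h^*) = 0$, which matches the claimed formula since its indicator vanishes. In the case $h_{t-1} < h^*$, the same observation gives $h_{n-1} = h_T < h^*$ for all $n \in \{t, \ldots, T+T'\}$, so each term falls into the second branch of \Cref{eq:clipped_reward}, yielding $\tilde{R}_n(Y_{a_n}, h_{n-1}) = R(Y_{a_n})$ almost surely. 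The value function therefore reduces to
\begin{equation*}
    v^{\pi^*}_t(h_{t-1}) = \mathbb{E}_{a_n \sim \pi^*_n(h_{n-1})}\left[\sum_{n=t}^{T+T'} R(Y_{a_n})\right].
\end{equation*}

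To conclude, I would argue that $\pi^*$ selects $a_1$ at every post-evaluation step by backward induction starting from $n = T+T'$: at that last step the Bellman maximand is simply $\mu_a$, uniquely maximised at $a_1$; at any earlier $n \in \{T+1, \ldots, T+T'-1\}$, because the state $h_n$ remains frozen at $h_T$ regardless of the action chosen, the continuation value is a constant by the inductive hypothesis, so the greedy choice again reduces to $\arg\max_a \mu_a = a_1$. Summing $\mu_{a_1}$ over the $T+T'-t+1$ remaining indices yields $(T'+T+1-t)\mu_{a_1}$. The only mild subtlety, which is arguably the main thing to handle carefully, is cleanly separating the two cases via the indicator and invoking the frozen-dynamics identity at the right moment; the optimality of playing $a_1$ in the unconstrained post-evaluation phase is otherwise immediate from the decoupling of the maximisation across steps.
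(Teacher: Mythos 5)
Your proposal is correct and follows essentially the same route as the paper's proof: split on the indicator, use the frozen post-evaluation dynamics and the branches of \Cref{eq:clipped_reward} to reduce the value to either $0$ or an unconstrained sum of rewards, and conclude that $\pi^*$ plays $a_1$ throughout the post-evaluation phase. Your backward-induction justification for why $a_1$ is optimal at every post-evaluation step is slightly more explicit than the paper's (which simply asserts it from $\tilde{R}_t(Y_a, h_{t-1}) = R(Y_a)$), but it is the same argument in substance.
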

\begin{proof}
    By \Cref{eq:constrbud}, $h_{t-1} < h^*$ if and only if $h_T < h^*$. By \Cref{eq:clipped_reward}, if $h_T = h^*$ then $\tilde{R}_{t}(Y_{a_t}, h_{t-1}) = 0$ almost surely, such that $v_{t}^{\pi^*}(h_{t-1}) = 0$ for all $T+1 \leq t \leq T+T'$, which agrees with \Cref{eq:val_fn_posteval}. Now suppose $h_{t-1} < h^*$. By \Cref{eq:clipped_reward}, $\tilde{R}_{t}(Y_a, h_{t-1}) = R(Y_a)$, such that the optimal policy is to pull $a_1$ and $\pi^*_{t}(h_{t-1})$ is a point mass on $a_1$ for $T+1 \leq t \leq T+T'$. Then on this event and by linearity of expectations,
    \begin{equation}
        v_{t}^{\pi^*}(h_{t-1}) = \mathbb{E}\left[\sum_{n=t}^{T+T'} R(Y_{a_1})\right] = \left(T' + T + 1 - t\right) \mathbb{E}\left[R(Y_{a_1})\right],
    \end{equation}
    which agrees with \Cref{eq:val_fn_posteval}.
\end{proof}

\subsection{Proofs of statements in the main text}

\begin{proof}[Proof of \Cref{prop:value_fn_form}]\label{app:value_fn_form}

    We proceed by backwards induction. Let $\mathcal{H}_t^* = \mathcal{H}_t \setminus \{h^*\}$. From \Cref{lem:val_fn_posteval}, we have that $v^{\pi^*}_{T+1}(h_T) = \mathbb{I}[h_T < h^*] T' \mu_{a_1}$. From \Cref{lem:val_function_forward}, we then have that at $t = T$,
    \begin{align}
        v^{\pi^*}_{T}(h_{T-1}) &= \mathbb{E}_{a_T \sim \pi^*_T(h_{T-1})}\left[\tilde{R}_{T}(Y_{a_T}, h_{T-1})\right] + \sum_{h_T \in \mathcal{H}_{T}^*} \mathbb{P}(h_T \mid h_{T-1}, \pi^*)\cdot v_{T+1}^{\pi^*}(h_T)\\
        &= \mathbb{E}_{a_T \sim \pi^*_T(h_{T-1})}\left[\tilde{R}_{T}(Y_{a_T}, h_{T-1})\right] + \sum_{h_T \in \mathcal{H}_{T}^*} \mathbb{P}(h_T \mid h_{T-1}, \pi^*)\cdot T' \mu_{a_1}\\
        &= \mathbb{E}_{a_T \sim \pi^*_T(h_{T-1})}\left[\tilde{R}_{T}(Y_{a_T}, h_{T-1})\right] + P^{0}_{surv}(h_{T-1}, \pi^*)\cdot T' \mu_{a_1}\\
        &= \nu^{\pi^*}_{T}(h_{T-1}) + P^{0}_{surv}(h_{T-1}, \pi^*)\cdot T' \mu_{a_1}.
    \end{align}
    This is the base case. Suppose the induction hypothesis holds for $t + 1 \leq t' \leq T$, i.e., that for $t + 1 \leq t' \leq T$ we have
    \begin{equation}
        v_{t'}^{\pi^*}(h_{t'-1}) = \nu^{\pi^*}_{t'}(h_{t'-1}) + P^{T-t'}_{surv}(h_{t'-1}, \pi^*)\cdot T' \mu_{a_1}.
    \end{equation}
    Then, using \Cref{lem:val_function_forward} and the induction hypothesis,
    \begin{align}
        \begin{split}
            v_t^{\pi^*}(h_{t-1}) = {}&\mathbb{E}_{a_t \sim \pi^*_t(h_{t-1})}\left[\tilde{R}_t(Y_{a_t}, h_{t-1})\right]\\ 
            &+ \sum_{h_t \in \mathcal{H}_t^*}\mathbb{P}(h_t\mid h_{t-1}, \pi^*)\cdot \left[ \nu^{\pi^*}_{t+1}(h_{t}) + P^{T-(t+1)}_{surv}(h_{t}, \pi^*)\cdot T' \mu_{a_1} \right].
        \end{split}
    \end{align}
    We consider first the third term. This becomes
    \begin{align}
        \sum_{h_t \in \mathcal{H}_t^*}\mathbb{P}(h_t\mid h_{t-1}, \pi^*)\cdot &P^{T-(t+1)}_{surv}(h_{t}, \pi^*)\cdot T' \mu_{a_1}\\
        ={}&{}T' \mu_{a_1} \cdot \sum_{h_t \in \mathcal{H}_t^*}\mathbb{P}(h_t\mid h_{t-1}, \pi^*)\cdot P^{T-(t+1)}_{surv}(h_{t}, \pi^*)\\
        ={}&{}T' \mu_{a_1}\cdot P^{T-(t+1)+1}_{surv}(h_{t-1}, \pi^*)\\
        ={}&{}T' \mu_{a_1} \cdot P^{T - t}_{surv}(h_{t-1}, \pi^*).
    \end{align}

    We next consider the first and second term. These become
    \begin{align}
        {}&{}\mathbb{E}_{a_t \sim \pi^*_t(h_{t-1})}\left[\tilde{R}_t(Y_{a_t}, h_{t-1})\right] + \sum_{h_t \in \mathcal{H}_t^*}\mathbb{P}(h_t\mid h_{t-1}, \pi^*)\cdot  \nu^{\pi^*}_{t+1}(h_{t})\\
        ={}&{} \mathbb{E}_{a_t \sim \pi^*_t(h_{t-1})}\left[\tilde{R}_t(Y_{a_t}, h_{t-1})\right] + \mathbb{E}_{a_t \sim \pi^*_{t}(h_{t-1})}\left[ \nu_{t+1}^{\pi^*}(h_{t}) \right]\\
        ={}&{} \mathbb{E}_{a_t \sim \pi^*_t(h_{t-1})}\left[\tilde{R}_t(Y_{a_t}, h_{t-1}) + \nu_{t+1}^{\pi^*}(h_{t}) \right]\\
        ={}&{} \nu_{t}^{\pi^*}(h_{t-1}).
    \end{align}
    Putting these together, this gives
    \begin{equation}
        v_{t}^{\pi^*}(h_{t-1}) = \nu^{\pi^*}_{t}(h_{t-1}) + P^{T-t}_{surv}(h_{t-1}, \pi^*)\cdot T' \mu_{a_1}
    \end{equation}
    as required. The optimal $q$-function follows by considering each action $a_t$ at time $t$ individually:
    \begin{align}
        q_t^{\pi^*}(h_{t-1}, a_t) &= \mathbb{E}\left[\tilde{R}_t(Y_{a_t}, h_{t-1}) + \nu_{t+1}^{\pi^*}(h_{t}) \right] + P^{T-t, \pi^*}_{surv}(h_{t-1}, a_t)\cdot T' \mu_{a_1}.
        % &= \mathbb{E}\left[\tilde{R}_t(Y_{a_t}, h_{t-1})\right] + \sum_{h_t \in \mathcal{H}_t^*}  \nu_{t+1}^{\pi^*}(h_{t}) + P^{T-t}_{surv}(h_{t-1}, a_t)\cdot T' \mu_{a_1}
    \end{align}
\end{proof}

\begin{proof}[Proof of \Cref{prop:long_term}]\label{app:long_term_survival}
    Assume the condition on $h_{t-1}$ holds. In particular, $h_{t-1} < h^*$ throughout. Then for any $a \in \mathcal{A} \setminus \{\hat{a}_t\}$:
    \begin{align}
        &h_{t-1} \geq \frac{\varepsilon^{\pi^*}_{t}}{\hat{\beta}^{\pi^*}_{t}} - T'\mu_{a_1}\\
        \overset{(a)}{\implies} &h_{t-1} \geq \frac{w^{\pi^*}_{t}(h_{t-1}, a) - w^{\pi^*}_{t}(h_{t-1}, \hat{a}_t)}{\hat{\beta}^{\pi^*}_{t}} - T'\mu_{a_1}\\
        \overset{(b)}{\implies} &h_{t-1} \geq \frac{w^{\pi^*}_{t}(h_{t-1}, a) - w^{\pi^*}_{t}(h_{t-1}, \hat{a}_t)}{P^{T-t,\pi^*}_{surv}(h_{t-1}, \hat{a}_t) - P^{T-t,\pi^*}_{surv}(h_{t-1}, a)} - T'\mu_{a_1},
    \end{align}
    where in steps (a) and (b) we have used \Cref{ass:sandbagging_long_term}. Rearranging and subtracting $h_{t-1}$ from both sides gives
    \begin{multline}\label{eq:comparing_qs}
        -h_{t-1}\left( 1-  P^{T-t,\pi^*}_{surv}(h_{t-1}, \hat{a}_t)\right) + w^{\pi^*}_{t}(h_{t-1}, \hat{a}_t) + T'\mu_{a_1}\cdot P^{T-t,\pi^*}_{surv}(h_{t-1}, \hat{a}_t)\geq\\ -h_{t-1}\left( 1- P^{T-t,\pi^*}_{surv}(h_{t-1}, a)\right) +  w^{\pi^*}_{t}(h_{t-1}, a) + T'\mu_{a_1} \cdot P^{T-t,\pi^*}_{surv}(h_{t-1}, a).
    \end{multline}
    The first term on the left-hand side of this inequality
    \begin{equation}
        -h_{t-1}\left( 1-  P^{T-t,\pi^*}_{surv}(h_{t-1}, \hat{a}_t)\right) = \mathbb{E}_{\overset{a_n \sim \pi^*_{n}(h_{n-1})}{t < n \leq T}}\left[-h_{t-1}\left(1 - \mathbb{I}\left\{\max_{t \leq t' \leq T} h_{t'} < h^*\right\}\right)\right].
    \end{equation}
    Finally, since on the event $\left\{\max_{t \leq t' \leq T} h_{t'} < h^*\right\}^c$ we have, almost surely for any $a \in \mathcal{A}$,
    \begin{equation}
        G^{\pi}_{t}(h_{t-1}, a, a_{t+1:T}) = -h_{t-1},
    \end{equation}
    which, using \Cref{eq:defn_wpit}, gives
    \begin{equation}
        -h_{t-1}\left( 1-  P^{T-t,\pi^*}_{surv}(h_{t-1}, \hat{a}_t)\right) + w^{\pi^*}_{t}(h_{t-1}, \hat{a}_t) =
        \mathbb{E}_{a_n \sim \pi^*_{n}(h_{n-1})}\left[G^{\pi}_{t}(h_{t-1}, \hat{a}_t, a_{t+1:T})\right].
    \end{equation}
    Substituting in to \Cref{eq:comparing_qs} gives that
    \begin{multline}
        \mathbb{E}_{a_n \sim \pi^*_{n}(h_{n-1})}\left[G^{\pi}_{t}(h_{t-1}, \hat{a}_t, a_{t+1:T})\right] + T' \mu_{a_1} \cdot P^{T-t,\pi^*}_{surv}(h_{t-1}, \hat{a}_t)\\ \geq \mathbb{E}_{a_n \sim \pi^*_{n}(h_{n-1})}\left[G^{\pi}_{t}(h_{t-1}, a, a_{t+1:T})\right] + T' \mu_{a_1} \cdot P^{T-t,\pi^*}_{surv}(h_{t-1}, a)
    \end{multline}
    which, by \Cref{prop:value_fn_form}, gives
    \begin{equation}
        q^{\pi^*}_{t}(h_{t-1}, \hat{a}_t) \geq q^{\pi^*}_{t}(h_{t-1}, a),
    \end{equation}
    as required.
\end{proof}

\section{Further experimental details}\label{app:exp}

Here we provide the distribution over outcomes corresponding to each arm for the experiments described in \Cref{sec:exp} and \Cref{sec:exp_llr}. We give these distributions in the form
\begin{equation}
    p_{a} = \left(p_{a}(y_{ts}), p_{a}(y_{qs}), p_{a}(y_{sd}), p_{a}(y_{qd}), p_{a}(y_{vd}), p_{a}(y_{ed}))\right).
\end{equation}
The arm distributions are as follows:
\begin{align}
    p_{a_1} &= \left(0.01, 0.04, 0.05, 0.05, 0.1,  0.75\right),\\
    p_{a_2} &= \left(0.1,  0.1,  0.07, 0.03, 0.6,  0.1\right),\\
    p_{a_3} &= \left(0.25,  0.1,  0.05,  0.05, 0.05, 0.5\right),\\
    p_{a_4} &= \left(0.3,  0.25, 0.2,  0.05, 0.1,  0.1\right),\\
    p_{a_5} &= \left(0.95, 0.01, 0.01, 0.01, 0.01, 0.01\right).
\end{align}

All experiments were run on a Macbook Pro 2022 model with M2 chip, requiring $\sim 1$ hour of CPU time.

\end{document}